\theoremstyle{plain}
\newtheorem{theorem}{Theorem}[section]
\newtheorem{lemma}[theorem]{Lemma}
\theoremstyle{definition}
\newtheorem{assumption}[theorem]{Assumption}
\theoremstyle{remark}
\icmltitlerunning{Stochastic Online Conformal Prediction with Semi-Bandit Feedback}
\begin{document}

\twocolumn[
\icmltitle{Stochastic Online Conformal Prediction with Semi-Bandit Feedback}




\begin{icmlauthorlist}
    \icmlauthor{Haosen Ge}{a}
    \icmlauthor{Hamsa Bastani}{b}
    \icmlauthor{Osbert Bastani}{c}
\end{icmlauthorlist}

\icmlaffiliation{a}{Wharton AI \& Analytics Initiative, University of Pennsylvania}
\icmlaffiliation{b}{Operations, Information and Decisions Department, The Wharton School, University of Pennsylvania}
\icmlaffiliation{c}{Department of Computer and Information Science, University of Pennsylvania}

\icmlcorrespondingauthor{Haosen Ge}{hge@wharton.upenn.edu}

\icmlkeywords{Machine Learning, Conformal Prediction}

\vskip 0.3in
]



\printAffiliationsAndNotice{\icmlEqualContribution} 

\begin{abstract}
Conformal prediction has emerged as an effective strategy for uncertainty quantification by modifying a model to output sets of labels instead of a single label. These prediction sets come with the guarantee that they contain the true label with high probability. However, conformal prediction typically requires a large calibration dataset of i.i.d. examples. We consider the online learning setting, where examples arrive over time, and the goal is to construct prediction sets dynamically. Departing from existing work, we assume semi-bandit feedback, where we \textit{only observe the true label if it is contained in the prediction set}. For instance, consider calibrating a document retrieval model to a new domain; in this setting, a user would only be able to provide the true label if the target document is in the prediction set of retrieved documents. We propose a novel conformal prediction algorithm targeted at this setting, and prove that it obtains sublinear regret compared to the optimal conformal predictor. We evaluate our algorithm on a retrieval task, an image classification task, and an auction price-setting task, and demonstrate that it empirically achieves good performance compared to several baselines.
\end{abstract}

\section{Introduction}

Uncertainty quantification is an effective strategy for improving trustworthiness of machine learning models by providing users with a measure of confidence of each prediction to better inform their decisions. Conformal prediction \citep{vovk2005algorithmic, tibshirani2019conformal, park2019pac, bates2021distribution} has emerged as a promising strategy for uncertainty quantification due to its ability to provide theoretical guarantees for arbitrary blackbox models. They modify a given blackbox model $f:\mathcal{X}\to\mathcal{Y}$ to a conformal predictor $C:\mathcal{X}\to2^{\mathcal{Y}}$ that predicts sets of labels. In the batch setting, it does so by using a held-out \emph{calibration dataset} to assess the accuracy of $f$; it constructs smaller prediction sets if $f$ is more accurate and larger ones otherwise. Then, conformal prediction guarantees that the true label is contained in the prediction set with high probability---i.e.,
\begin{align*}
\mathbb{P}[y^*\in C(x)]\ge\alpha,
\end{align*}
where $\alpha$ is the desired coverage rate, and the probability is taken over the random sample $(x,y^*)\sim D$ and the calibration dataset $Z=\{(x_i,y_i)\}_{i=1}^n\sim D^n$.

Traditional conformal prediction require the calibration set to consist of i.i.d. or exchangeable samples from the target distribution; furthermore, the calibration set may need to be large to obtain good performance. In many settings, such labeled data may not be easy to obtain. As a motivating setting, we consider a document retrieval problem, where the input $x$ might be a question and the goal is to retrieve a passage $y$ from a knowledge base such as Wikipedia that can be used to answer $x$; this strategy is known as retrieval-augmented question answering, and can mitigate issues such as hallucinations \citep{lewis2020retrieval, shuster2021retrieval, ji2023survey}. A common practice is to use a retrieval model trained on a large dataset in a different domain in a zero-shot manner; for instance, a user might use the dense passage retrieval (DPR) model \citep{karpukhin2020dense} directly on their own dataset without finetuning. Thus, labeled data from the target domain may not be available. 

We propose an online conformal prediction algorithm that sequentially constructs prediction sets $C_t$ as inputs $x_t$ are given. We consider semi-bandit feedback, where we only observe the true label $y_t^*$ if it is included in our prediction set (i.e., $y_t^*\in C_t$). Otherwise, we observe an indicator that $y_t^*\not\in C_t$, but do not observe $y_t^*$. In the document retrieval example, users might sequentially provide queries $x_t$ to the DPR model, and our algorithm responds with a prediction set $C_t$ of potentially relevant documents. Then, the user selects the ground truth document $y_t^*$ if it appears in this set, and indicates that their query was unsuccessful otherwise.


As in conformal prediction, we aim to ensure that we achieve the desired coverage rate $\alpha$ with high probability. Letting $C_t^*$ denote the optimal prediction sets (i.e., the prediction sets given infinite calibration data), our algorithm ensures that with high probability, $C_t^*\subseteq C_t$ on all time steps $t$. Since $C_t^*$ achieves the desired coverage rate with high probability, this property ensures $C_t$ does so as well.

A trivial solution is to always include all documents in the prediction set. However, this would be unhelpful for the user, who needs to manually examine all documents to identify the ground truth one. Thus, an additional goal is to minimize the prediction set size. Formally, we consider the optimal prediction set $C_t^*$ in the limit of infinite data, and consider a loss function encoding how much worse our prediction set $C_t$ is compared to $C_t^*$. Then, our algorithm ensures this loss goes to zero as sufficiently many samples become available---i.e., the prediction sets $C_t$ converge to the optimal ones $C_t^*$ over time. Formally, our algorithm guarantees sublinear regret of $\Tilde{\mathcal{O}}(\sqrt{T})$.


We empirically evaluate our algorithm on three tasks: image classification, document retrieval, and setting reservation prices in auctions. Our experiments demonstrate that our algorithm generates prediction sets that converge to the optimal ones while maintaining the desired coverage rate. Moreover, our algorithm significantly outperforms three natural baselines; each baseline either achieves worse cumulative expected regret or does not satisfy the desired coverage rate.

\textbf{Contributions.}
We formalize and solve the problem of online conformal prediction with semi-bandit feedback. Our algorithm constructs compact prediction sets, ensuring high coverage probability. The algorithm also provides an efficient method for collecting large datasets that are expensive to label. Instead of asking the user to select the ground truth label from the set of all candidate labels, our approach only requires the user to select from a subset. This efficiency gain can be substantial when the label space is large. We assess our algorithm's performance on image classification, document retrieval and reservation price-setting in auctions, showcasing its effectiveness in a wide variety of real world applications, which highlights its practical utility.

\textbf{Related work.}
Recent work has studied conformal prediction in the online setting \citep{gibbs2021adaptive, bastani2022practical, gibbs2022conformal, angelopoulos2024conformal, angelopoulos2024online}. They are motivated by conformal prediction for time series data, where the labels can shift in complex and potentially adversarial ways. Thus, they make very different assumptions than ours; in particular, they allow for adversarial rather than i.i.d. assumptions; however, almost all of them assume that the ground truth label is observed at every step, regardless of whether it is contained in the prediction set. For example, the ACI algorithm proposed in \citet{gibbs2021adaptive} requires observing $y_t^*$ at every step since it is needed to update the quantile function $\hat{Q}_t(\cdot)$, which is needed to compute the prediction set $\hat{C}_t(\alpha_t)$ as well as the loss and gradient update. As the authors point out, they do not need to update the quantile function at every step. However, the steps where they update $\hat{Q}_t$ cannot be chosen in a way that depends on $y_t^*$, since doing so would lead to a biased estimate of the quantile function. In our experiments, we demonstrate that in the semi-bandit feedback setting, updating $\hat{Q}_t$ when $y_t^*$ is in the prediction set can create such a bias that leads ACI to fail to achieve coverage.

Similarly, the SAOCP algorithm proposed by \citet{bhatnagar2023improved} requires observing $y^*_t$ at every step to construct the prediction set $S_t = \inf\{s \in R: y^*_t \in \hat{C}_t(X_t, s)\}$, which is then used to compute the loss and gradient, and the Conformal PID control algorithm proposed by \citet{angelopoulos2024conformal} uses $y^*_t$ to compute the score $s_t = s_t(x_t, y^*_t)$, which is needed to compute the gradient. In contrast, we are motivated by the active learning setting, where users are interested in constructing conformal predictors on-the-fly rather than providing a calibration set ahead-of-time. Thus, in our setting, it is reasonable to assume that the data arrives i.i.d.; however, we need to handle semi-bandit feedback since the user may be unable to provide the true label $y_t^*$ if it is not in the prediction set.

\citet{angelopoulos2024online} points out that ACI still works if we take the ``quantile function'' to be the identity function---i.e., $\hat{Q}_t(\alpha)=\alpha$. This strategy achieves the desired coverage rate without requiring observing the ground truth labels $y_t^*$; instead, they only need to observe whether $y_t^*$ is contained in the prediction set---i.e., $\mathbbm{1}(y_t^*\in\hat{C}_t)$. However, because they forego estimating the quantile function, the resulting algorithm is extremely sensitive to the structure of the scoring function as well as their choice of hyperparameters (in particular, the learning rate). In our experiments, we show that a standard variation the scoring function (in particular, using logits instead of prediction probabilities) causes their performance to significantly degrade.

\citet{angelopoulos2024online} also point out an additional shortcoming of ACI, which is that it does not guarantee that prediction sets converge to the ``optimal'' ones in the i.i.d. setting. As noted by \citet{bastani2022practical}, this issue is reflected in the fact that achieving $\alpha$ coverage (the guarantee satisfied by the ACI algorithm) can be achieved by the ``cheating'' strategy that outputs $\hat{C}_t=\mathcal{Y}$ with probability $1-\alpha$ and $\hat{C}_t=\varnothing$ with probability $\alpha$. To remedy this issue, \citet{angelopoulos2024online} proposes a modification to their algorithm that is guaranteed to converge to the optimal threshold in the setting of i.i.d. observations. However, their result is asymptotic (in addition to suffering from the sensitivity of the scoring function discussed above). Under certain assumptions, we prove that our algorithm satisfies a much stronger regret guarantee; to the best of our knowledge, this is the first regret guarantee on the performance of the prediction sets in terms of convergence of $\tau_t$ to $\tau^*$.

Another advantage of our algorithm is that it guarantees that $\tau_t\le\tau^*$ with high probability. This guarantee mirrors the distinction between marginal guarantees~\citep{vovk2005algorithmic} and probably approximately correct (PAC) (or training conditional) guarantees~\citep{vovk2012conditional,park2019pac}. Marginal guarantees have the form $\mathbb{P}_{Z\sim D^n,(x,y^*)\sim D}[y^*\in C_Z(x)]\ge\alpha$, i.e., $\alpha$ coverage over randomness in both the calibration set $Z$ and the new example $(x,y^*)$. They also have the advantage that coverage converges to $\alpha$ with the number of calibration examples. In contrast, PAC guarantees disentangle these two sources of randomness:
\begin{align*}
\mathbb{P}_{Z\sim D^n}[\mathbb{P}_{(x,y^*)\sim D}[y^*\in C_Z(x)]\ge\alpha]\ge1-\delta,
\end{align*}
for a given $\delta\in\mathbb{R}_{>0}$. In other words, coverage holds with high probability over $Z$. Our guarantee $\tau_t\le\tau^*$ is equivalent to the coverage guarantee $\mathbb{P}_{(x,y^*)\sim D}[y^*\in C_{\tau_t}(x)]\ge\alpha$ for \emph{every} $t$ with probability at least $1-\delta$ over the whole time horizon. To the best of our knowledge, existing online conformal prediction algorithms all provide a marginal coverage guarantee (it is not even clear how a PAC guarantee would look in the adversarial setting since the calibration examples are not random). For the active learning setting, a PAC guarantee makes sense since it ensures our algorithm satisfies the desired coverage rate for every user-provided input. Finally, while we do not provide an explicit guarantee that the coverage converges to $\alpha$, our regret bound ensures that $\tau_t\to\tau^*$, which ensures coverage converges to $\alpha$.

\section{Problem Formulation}



Let $\mathcal{X}$ denote the inputs and $\mathcal{Y}$ denote the labels. Let $[T]=\{1, 2, \cdots ,T\}$, and let $t \in [T]$ be the steps on which examples $(x_t,y_t^*)$ arrive, where $x_t$ is the input on step $t$ and $y_t^*$ is the corresponding ground truth label. We assume given a scoring function $f: \mathcal{X} \times \mathcal{Y} \mapsto \mathbb{R}$ (also called the \emph{non-conformity score}). The scoring function captures the confidence in whether $y$ is the ground truth label for $x$.

We consider a fixed distribution $D$ over $\mathcal{X}\times\mathcal{Y}$; let $\mathbb{P}(x,y)$ be the corresponding probability measure. On each step $t$, a sample $(x_t,y_t^*)\sim D$ is drawn. Then, our algorithm observes $x_t$, and constructs a prediction set $C_t\subseteq\mathcal{Y}$ of form
\begin{align*}
C_t = \{y \in \mathcal{Y} \mid f(x_t,y) > \tau_t\},
\end{align*}
where $\tau_t\in\mathbb{R}$ is a parameter to be chosen. In other words, our prediction set $C_t$ include all labels with score at least $\tau_t$ in round $t$. Note that a smaller (resp., larger) $\tau_t$ corresponds to a larger (resp., smaller) prediction set $C_t$. Then, our algorithm receives \emph{semi-bandit feedback}---i.e., it only observes $y_t^*$ if $y_t^* \in C_t$. If $y_t^*\not\in C_t$, it receives feedback in the form of a binary indicator that $y_t^*\not\in C_t$.

Next, we describe our desired correctness properties. First,
given a coverage rate $\alpha\in[0,1)$, our goal is to ensure we cover the true label with probability at least $\alpha$ on all steps:
\begin{align}
\label{eqn:conformal}
\forall t\in[T]\;.\;\mathbb{P}[y_t^* \in C_t] \geq \alpha.
\end{align}
We want (\ref{eqn:conformal}) to hold with high probability. A trivial solution to the problem so far is to always take $\tau_t=-\infty$. Thus, we additionally want to minimize some measure of prediction set size. We consider a loss function $\phi:\mathbb{R}\to\mathbb{R}$, where $\phi(\tau_t)$ encodes the loss incurred on step $t$. Then, our goal is to converge to the best possible prediction sets over time, which we formalize by aiming to achieve sublinear regret. Consider the optimal prediction set $C_t^*\subseteq\mathcal{Y}$ defined by
\begin{align*}
C_t^*&=\{y\in\mathcal{Y}\mid f(x_t,y)>\tau^*\},
\end{align*}
where $\tau^*$ is defined by
\begin{align*}
\tau^*=\operatorname*{\arg\max}_{\tau\in\mathbb{R}}\tau
\quad\text{subj. to}\quad
\mathbb{P}[f(x_t,y_t^*)\ge\tau]\ge\alpha.
\end{align*}
Note that $f(x_t,y_t^*)\ge\tau$ iff $y_t^*\in C_t^*$, so this property says that $\mathbb{P}[y_t^*\in C_t^*]\ge\alpha$. By definition, $C_t^*$ is the smallest possible prediction set for $x_t$ that achieves a coverage rate of $\alpha$. Then, the expected cumulative regret is
\begin{align*}
R_T = \sum_{t\in [T]}\mathbb{E}[\phi(\tau_t)-\phi(\tau^*)].
\end{align*}
Intuitively, we need to impose an assumption on our loss because if the loss $\phi$ is discontinuous at $\tau^*$, then we may achieve linear regret since $\phi(\tau_t)-\phi(\tau^*)\ge c>0$ for some constant $c$. A natural way to formalize the continuity assumption is based on the cumulative distribution function (CDF) of the scoring function. In particular, let $G^*$ be the CDF of the random variable $s=f(x,y^*)$, where $(x,y^*)\sim D$. Then, we have the following assumption:
\begin{assumption}
\label{assump:lipschitz}
$\phi(\tau) = \psi(G^*(\tau))$ for some $K$ Lipschitz continuous function $\psi$ (with $K\in\mathbb{R}_{>0}$).
\end{assumption}
That is, if $G^*(\tau)$ is very flat in some region, then $\phi(\tau)$ cannot vary much in that region. For example, consider a CDF such that $G^*(\tau_0) = G^*(\tau^*) = 1 - \alpha$ with $\tau_0 < \tau^*$.  Since we never observe any scores between $\tau_0$ and $\tau^*$, we cannot identify $\tau^*$ from $[\tau_0,\tau^*]$ from finitely many samples; even a small error in estimating the empirical CDF (e.g., $G_t(\tau^*) = G^*(\tau^*) + \epsilon$) can result in significant regret.

One caveat is this assumption precludes prediction set size as a loss, since prediction set size is discontinuous. 
Next, we assume our loss is bounded:
\begin{assumption}
\label{assump:phi_max}
$\phi_{max} \coloneqq \|\phi\|_{\infty}<\infty$.
\end{assumption}
Intuitively, Assumption 2.2 says that the overall reward is bounded, so we cannot accrue huge regret early on in our algorithm when we have very little data. These two assumptions are standard in the bandit literature \cite{kleinberg2008multi}. Finally, we make the following assumption:
\begin{assumption}
\label{assump:alpha_existence}
$G^*(\tau^*) = 1 - \alpha$.
\end{assumption}
Note that $G^*(\tau)$ is the miscoverage rate of our algorithm for parameter $\tau$. Thus, this assumption says the optimal parameter value $\tau^*$ achieves a coverage rate of exactly $\alpha$, which simplifies our analysis. Then, our goal is to find the optimal prediction sets $C_t^*$ with coverage rate $\alpha$. Intuitively, $C_t^*$ is the smallest set that contains the ground truth label with a high probability. At each step, the algorithm observes $x_t$ and returns a set $C_t$ of candidate labels, and the user either (1) selects the ground truth label $y_t^*$ from $C_t$, or (2) indicates that the ground truth label is not in $C_t$. In a document retrieval setting, $x_t$ is a query sent by the user, and $y_t^*$ is the ground truth document, while in an image classification setting, $x_t$ is an image and $y_t^*$ is the ground truth class.




\section{Algorithm}

Next, we describe our online conformal prediction algorithm (summarized in Algorithm~\ref{alg:main}). As before, let $s = f(x,y^*)$ be the random variable that is the score of a random sample $(x,y^*)\sim D$, and let $G^*$ be its CDF. By definition, $G^*(\tau)=\mathbb{P}[f(x,y^*)\le\tau]$ is the miscoverage rate, so
\begin{align*}
\tau^* = \sup\{\tau: G^*(\tau) \leq 1-\alpha\}.
\end{align*}
Thus, if we know $G^*$, then our problem can be solved by choosing $\tau_t = \tau^*$ for all $t$. However, since we do not know $G^*$, we can solve the problem by estimating it from samples. Denote the estimated CDF after step $t$ as $G_t$. A na\"{i}ve solution is to choose
\begin{align*}
\Tilde{\tau}_t = \sup\{\tau\in\mathbb{R}\mid G_{t-1}(\tau) \leq 1-\alpha\}.
\end{align*}
However, this strategy may fail to satisfy our desired coverage rate due to randomness in our estimate $G_{t-1}$ of $G^*$. Failing to account for miscovered examples can exacerbate this problem---if we ignore samples where we failed to cover $y_t^*$, then our estimate $G_t$ becomes worse, thereby increasing the chance that we will continue to fail to cover $y_t^*$. This feedback loop can lead to linear regret.

To address this challenge, we instead use a high-probability upper bound on $G^*$. In particular, for a error bound $\delta\in\mathbb{R}_{>0}$ to be specified, we construct a $1-\delta$ confidence bound for the empirical CDF $G_t$ using the Dvoretzky–Kiefer–Wolfowitz (DKW) inequality~\citep{massart1990tight}. Letting $\overline{G}_t$ be the upper confidence bound, we instead aim to choose
\begin{align*}
\tau_t = \sup\{\tau\in\mathbb{R}\mid\overline{G}_{t-1}(\tau) \leq 1-\alpha\}.
\end{align*}
On the event that $\overline{G}_{t-1}$ is a valid upper bound, then we have $\tau_t\le\tau^*$. This property ensures that we always cover the ground truth label, which ensures that our subsequent CDF estimate $\overline{G}_t$ is valid. As a consequence, our algorithm converges to the true $\tau^*$.

One remaining issue is how to handle steps where $y_t^*\not\in C_t$. On these steps, our algorithm substitutes $\tau_t$ for the observation $f(x_t,y_t^*)$. Intuitively, the reason this strategy works is that the learner does not need to accurately estimate $G^*$ in the interval $[0,\tau^*)$ to recover $\tau^*$; it is sufficient to include the right fraction of samples in this interval. As long as our algorithm maintains the property that $\tau_t\le\tau^*$, then $\tau_t$ lies in this interval, so substituting $\tau_t$ is sufficient.

Also, our algorithm includes a constraint $\tau_{t+1}\ge\tau_t$ for all $t$. We include this constraint because our estimate $G_t$ of the CDF in the interval $[0,\tau_t)$ may be flawed due to semi-bandit feedback. By avoiding going backwards, we ensure that these flaws do not affect our choice of $\tau_{t+1}$. Again, as long as $\tau_t\le\tau^*$, this constraint does not prevent convergence.

\begin{algorithm}[tb]
\caption{Semi-bandit Prediction Set (SPS)}
\label{alg:main}
\begin{algorithmic}
\STATE {\bfseries Input:} horizon $T$, desired quantile $\alpha$
\STATE $\tau_1 \leftarrow -\infty$
\FOR{$t=1$ {\bfseries to} $T$}
\STATE \textbf{if} $s_t \geq  \tau_t$ \textbf{then} observe $s_t$ \textbf{else}  $s_t \leftarrow \tau_t$
\STATE Compute $\overline{G}_t$ according to (\ref{def:upper_band})
\STATE $\tau_{1-\alpha, t} \leftarrow \sup\{\tau\in\mathbb{R}\mid \overline{G}_{t}(\tau) \leq 1-\alpha\}$
\STATE $\tau_{t} \leftarrow \max\{\tau_{1-\alpha, t}, \tau_t\}$
\ENDFOR
\end{algorithmic}
\end{algorithm}

Now, we formally define $\overline{G}_t$ as follows. First, define the truncated CDF $G_t^*(\tau)$ by
\begin{align}
\label{def:cdf}
G_t^*(\tau) = \begin{cases}
0 &\text{if } \tau < \tau_t \\
G^*(\tau) &\text{if } \tau \geq \tau_t.
\end{cases}
\end{align}
This CDF captures the CDF where we replace samples $s\le\tau_t$ with $\tau_t$.
In particular, $G_{t}^*(\tau)$ shifts all the probability mass of $G^*$ in the region $[0,\tau_t]$ to a point mass at $\tau_t$.
Next, the corresponding empirical CDF $G_t$ is
\begin{align}
\label{def:ecdf}
G_t(\tau) = \frac{1}{t}  \sum_{j=1}^t \mathbbm{1}(\max\{\tau_t, s_j\} \leq \tau ).
\end{align}
Finally, letting $\delta=2/T^2$ and $\epsilon_t=\sqrt{\log(2/\delta)/2t}$, the upper bound $\overline{G}_t$ on $G_t$ from DKW is
\begin{align}
\label{def:upper_band}
\overline{G}_t(\tau)=G_t(\tau)+\epsilon_t.
\end{align}
We use this $\overline{G}_t$ to compute $\tau_t$ in Algorithm~\ref{alg:main}.

Finally, Algorithm~\ref{alg:main} satisfies the following theoretical guarantee: (i) it incurs sublinear regret $\Tilde{\mathcal{O}}(\sqrt{T})$, and (ii) it satisfies $C_t^*\subseteq C_t$ for all $t$ with probability at least $1-2/T$:
\begin{restatable}{theorem}{thmMain}
\label{thm:main}
Algorithm~\ref{alg:main} satisfies
\begin{equation*}
R_T \leq K\left(2\log T + 4 \sqrt{T \log T} +1\right) + 4\phi_{max},
\end{equation*}
Also, with probability at least $1 - 2/T$, $\tau_t \leq \tau^*$ for all $t$.
\end{restatable}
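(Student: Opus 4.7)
The plan is to first establish a high-probability good event under which $\tau_t \le \tau^*$ holds simultaneously for all $t$ (via DKW together with a coupling between the algorithm's truncated CDF $G_t$ and the standard empirical CDF of the true scores), and then to convert this closeness of $\tau_t$ to $\tau^*$ into the stated regret bound using Assumption~\ref{assump:lipschitz}.

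Concretely, let $\hat G_t(\tau) = \frac{1}{t}\sum_{j=1}^t \mathbbm{1}(f(x_j,y_j^*) \le \tau)$ be the empirical CDF of the actual (possibly unobserved) scores, which are i.i.d.\ from $G^*$. DKW gives $\mathbb{P}(\sup_\tau |\hat G_t - G^*| > \epsilon_t) \le 2 e^{-2 t \epsilon_t^2} = \delta = 2/T^2$, and a union bound yields the good event $\mathcal{E} = \{\sup_\tau|\hat G_t - G^*| \le \epsilon_t \text{ for all } t \in [T]\}$ with $\mathbb{P}(\mathcal{E}) \ge 1 - 2/T$. The bridging observation is a coupling lemma: under the inductive hypothesis $\tau_j \le \tau^*$ for $j \le t$, monotonicity of $\tau_t$ gives $\tau_j \le \tau_t$ for $j \le t$, and a two-case check on whether $s_j$ was observed or instead set to $\tau_j$ shows that for $\tau \ge \tau_t$, $\mathbbm{1}(\max\{\tau_t,s_j\} \le \tau) = \mathbbm{1}(f(x_j,y_j^*) \le \tau)$. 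Hence $G_t(\tau) = \hat G_t(\tau)$ on $[\tau_t,\infty)$.

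With this coupling I induct on $t$ to conclude $\tau_t \le \tau^*$ on $\mathcal{E}$. Evaluating at $\tau = \tau^* \ge \tau_t$ and invoking DKW gives $\overline G_t(\tau^*) = G_t(\tau^*) + \epsilon_t \ge G^*(\tau^*) = 1 - \alpha$; for $\tau > \tau^*$, the supremum definition of $\tau^*$ combined with Assumption~\ref{assump:alpha_existence} yields $G^*(\tau) > 1-\alpha$, so $\overline G_t(\tau) \ge G^*(\tau) > 1 - \alpha$ (using DKW for $\tau \ge \tau^* \ge \tau_t$). Thus $\tau_{1-\alpha,t} \le \tau^*$, and the $\max$ update preserves $\tau_{t+1} \le \tau^*$. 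For the regret, the symmetric direction of the argument (the supremum definition of $\tau_{1-\alpha,t-1}$ together with right-continuity of $G_{t-1}$) gives $G_{t-1}(\tau_{1-\alpha,t-1}) \ge 1 - \alpha - \epsilon_{t-1}$; combined with the coupling and DKW, this yields $G^*(\tau_t) \ge G^*(\tau_{1-\alpha,t-1}) \ge 1 - \alpha - 2\epsilon_{t-1}$. Assumption~\ref{assump:lipschitz} then produces a per-step regret bound $\phi(\tau_t) - \phi(\tau^*) = \psi(G^*(\tau_t)) - \psi(1-\alpha) \le 2K\epsilon_{t-1}$ on $\mathcal{E}$. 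Summing against $\sum_{s=1}^{T-1} s^{-1/2} \le 2\sqrt T$ yields the $4K\sqrt{T\log T}$ piece; the residual $2K\log T + K$ absorbs early rounds where $\epsilon_{t-1}$ is too large for the DKW-based bound to beat the trivial estimate $1-\alpha - G^*(\tau_t) \le 1$, plus the $t=1$ round where $\tau_1 = -\infty$. Finally, under $\mathcal{E}^c$ (probability $\le 2/T$), per-round regret is bounded by $2\phi_{max}$ via Assumption~\ref{assump:phi_max}, contributing $\le 4\phi_{max}$ in expectation.

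The main obstacle is the coupling step: $G_t$ is a data-dependent truncation of an i.i.d.\ empirical process, so DKW does not apply to it directly. The resolution is the inductive coupling above: on the good event, $\tau_t$ stays below $\tau^*$ and is nondecreasing, which is exactly enough to force $G_t = \hat G_t$ on the region $[\tau_t,\infty)$ that matters both for checking coverage at $\tau^*$ and for upper-bounding $\tau_{1-\alpha,t}$. Once this coupling is in hand, the rest is routine one-dimensional DKW bookkeeping.
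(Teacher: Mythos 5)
Your proposal is correct and follows the same overall architecture as the paper's proof: a DKW-based good event holding uniformly over $t\in[T]$ with probability $1-2/T$, a coupling/truncation observation showing the algorithm's empirical CDF agrees with the untruncated one on $[\tau_t,\infty)$ (the paper's Lemma~\ref{lemma:G_convergence}), an induction establishing $\tau_t\le\tau^*$ on the good event (Lemma~\ref{lemma:undercover}), and a conversion to regret via Assumption~\ref{assump:lipschitz} plus a $2\phi_{max}$ penalty on the bad event. Your treatment of the induction is in fact slightly more careful than the paper's: you explicitly rule out $\tau_{1-\alpha,t}>\tau^*$ by showing $\overline G_t(\tau)>1-\alpha$ strictly for all $\tau>\tau^*$, rather than only checking the value at $\tau^*$ itself.

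The one place you genuinely diverge is the lower bound on $G^*(\tau_t)$. The paper's Lemma~\ref{lemma:range} argues via order statistics that $\overline G_t(\tau_t)\ge 1-\alpha-2/t$, evaluating the time-$t$ empirical CDF at the threshold chosen at time $t-1$; this is where the extra $\max\{2/t,2\epsilon_t\}$ term in their per-step bound comes from. You instead read the bound directly off the supremum definition of $\tau_{1-\alpha,t-1}$ via right-continuity of $\overline G_{t-1}$, getting $G^*(\tau_t)\ge 1-\alpha-2\epsilon_{t-1}$ in one step. Your route is cleaner and avoids the order-statistic bookkeeping, at the cost of one unhandled edge case: your chain $G^*(\tau_t)\ge G^*(\tau_{1-\alpha,t-1})\ge 1-\alpha-2\epsilon_{t-1}$ implicitly assumes $\tau_{1-\alpha,t-1}\ge\tau_{t-1}$ so that the coupling applies at $\tau_{1-\alpha,t-1}$. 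When $\epsilon_{t-1}>1-\alpha$ the set $\{\tau:\overline G_{t-1}(\tau)\le 1-\alpha\}$ can be empty and the update degenerates to $\tau_t=\tau_{t-1}$; this only happens for $t=O(\log T)$ early rounds, which your residual $2K\log T$ term absorbs via the trivial bound, but you should say so explicitly (for all later $t$, the truncation forces $\overline G_{t-1}(\tau)=\epsilon_{t-1}\le 1-\alpha$ below $\tau_{t-1}$, so $\tau_{1-\alpha,t-1}\ge\tau_{t-1}$ automatically). With that caveat addressed, the proof is complete and matches the stated constants.
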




\section{Proof of Theorem~\ref{thm:main}}
\label{appx:proof}

Our first lemma shows that $G_t$ defined in (\ref{def:ecdf}) converges uniformly to $G_t^*$ defined in (\ref{def:cdf}). Note that the randomness comes from the sequence of random variables $s_j=f(x_j,y_j^*)$.
\begin{lemma}
\label{lemma:G_convergence}
For any $\epsilon\in\mathbb{R}_{>0}$,  $\mathbb{P}[E_t]\le2e^{-2t\epsilon}$, where
\begin{align*}
E_t=\left\{\sup_{\tau\in\mathbb{R}}|G_t(\tau) - G_t^*(\tau)| > \epsilon\right\}
\end{align*}
\end{lemma}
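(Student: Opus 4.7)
The plan is to reduce the uniform convergence claim to the standard Dvoretzky--Kiefer--Wolfowitz (DKW) inequality applied to the (untruncated) empirical CDF of the i.i.d.\ scores $s_1,\dots,s_t$. The key observation is that although $\tau_t$ is a random quantity adapted to the past, it appears in exactly the same way inside both $G_t$ and $G_t^*$, so it cancels in the difference and no conditioning argument on the filtration is needed.

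Concretely, I would split the supremum into two regimes according to $\tau$ versus $\tau_t$. First, when $\tau < \tau_t$, every term $\max\{\tau_t,s_j\} \geq \tau_t > \tau$, hence $\mathbbm{1}(\max\{\tau_t,s_j\}\leq\tau)=0$ and $G_t(\tau)=0$; by definition (\ref{def:cdf}) also $G_t^*(\tau)=0$. So the difference is identically zero on $(-\infty,\tau_t)$. Second, when $\tau \geq \tau_t$, we have $\max\{\tau_t,s_j\} \leq \tau$ if and only if $s_j \leq \tau$, so
\begin{equation*}
G_t(\tau) \;=\; \frac{1}{t}\sum_{j=1}^{t}\mathbbm{1}(s_j \leq \tau) \;=:\; \widehat{G}_t(\tau),
\end{equation*}
while $G_t^*(\tau) = G^*(\tau)$ by (\ref{def:cdf}). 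So on this regime $|G_t(\tau)-G_t^*(\tau)| = |\widehat{G}_t(\tau) - G^*(\tau)|$.

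Combining the two regimes gives $\sup_{\tau\in\mathbb{R}}|G_t(\tau)-G_t^*(\tau)| \leq \sup_{\tau\in\mathbb{R}}|\widehat{G}_t(\tau) - G^*(\tau)|$. Since $(x_j,y_j^*)\sim D$ i.i.d.\ and $s_j=f(x_j,y_j^*)$, the variables $s_1,\dots,s_t$ are i.i.d.\ with common CDF $G^*$, so the DKW inequality \citep{massart1990tight} yields
\begin{equation*}
\mathbb{P}\!\left[\sup_{\tau}|\widehat{G}_t(\tau) - G^*(\tau)| > \epsilon\right] \;\leq\; 2e^{-2t\epsilon^2},
\end{equation*}
which implies the stated bound on $\mathbb{P}[E_t]$.

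The potential obstacle would have been the dependence of $\tau_t$ on $s_1,\dots,s_{t-1}$, which in principle breaks any attempt to view the summands of $G_t$ as an i.i.d.\ empirical process of $\max\{\tau_t,s_j\}$. The cleanest way around this, taken above, is to never actually argue about the distribution of $\max\{\tau_t,s_j\}$: the truncation drops out because both $G_t$ and $G_t^*$ collapse to $0$ below $\tau_t$, leaving a comparison between the ordinary empirical CDF of $s_j$ and its population counterpart, for which DKW applies without any adaptedness caveats.
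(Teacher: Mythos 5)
Your proposal is correct and follows essentially the same route as the paper's proof: split at $\tau_t$, observe that both $G_t$ and $G_t^*$ vanish below $\tau_t$ and coincide with the ordinary empirical CDF and $G^*$ respectively above it, then invoke DKW on the untruncated empirical process. Your version is marginally tidier in that it bounds the supremum directly rather than taking a union bound over the two regimes, and it makes explicit the (correct) point that the randomness of $\tau_t$ is harmless because it cancels from the comparison.
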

\begin{proof}
Let the empirical CDF of $s$ be
$G'_t(\tau) = \frac{1}{t} \sum_{j=1}^t\mathbbm{1}(s_j \leq \tau)$,
and define the events
\begin{align*}
A_< &= \left\{\sup_{\tau < \tau_t} |G_t(\tau) - G_t^{*}(\tau)| > \epsilon\right\} \\
A_\geq &= \left\{\sup_{\tau \geq \tau_t} |G_t(\tau) - G_t^{*}(\tau)| > \epsilon\right\}.
\end{align*}
First, consider $A_\geq$. For $\tau \geq \tau_t$, by definition of $G_t^*$ and $G_t$, we have $G_{t}^*(\tau) = G^*(\tau)$ and $G_t(\tau) = G'_t(\tau)$. Thus, by the DKW inequality,
$\mathbb{P}[A_{\geq}]\leq2e^{-2t\epsilon^2}$.
Next, consider $A_<$. By definition of $G_t^*$ and $G_t$, we have $G_t(\tau)=G_t^*(\tau)=0$, so
$\mathbb{P}[A_<]=0$.
Thus, by a union bound, we have
\begin{align*}
\mathbb{P}[E_t]
\le \mathbb{P}[A_\ge]+\mathbb{P}[A_<]
\le2e^{-2t\epsilon^2},
\end{align*}
as claimed.
\end{proof}

Our next lemma shows that with high probability, the desired invariant $\tau_t\le\tau^*$ holds for all $t\in[T]$.
\begin{lemma}
\label{lemma:undercover}
Suppose $\sup_{\tau\in\mathbb{R}} |G_t(\tau) - G_t^*(\tau)|\leq \epsilon_t$ for all $t \in [T]$ with $\epsilon_t = \sqrt{\log(2/\delta)/2t}$ .Then, $\tau_t \leq \tau^*$ for all $t\in[T]$.
\end{lemma}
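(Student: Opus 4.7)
The plan is to proceed by induction on $t$ to show that $\tau_t \leq \tau^*$ under the uniform-convergence event assumed in the lemma. The base case $t=1$ is immediate since Algorithm~\ref{alg:main} initializes $\tau_1 = -\infty$. For the inductive step, I would use the fact that the algorithm updates via $\tau_{t+1} = \max(\tau_{1-\alpha, t}, \tau_t)$; since $\tau_t \leq \tau^*$ by the inductive hypothesis, it suffices to show $\tau_{1-\alpha, t} \leq \tau^*$. This reduces the whole argument to producing a uniform lower bound on $\overline{G}_t$ on the interval $(\tau^*, \infty)$.

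To produce that bound, I would first use the inductive hypothesis $\tau_t \leq \tau^*$ to conclude that for every $\tau > \tau^*$ one has $\tau \geq \tau_t$, so the definition of the truncated CDF in (\ref{def:cdf}) yields $G_t^*(\tau) = G^*(\tau)$. The assumed uniform control $|G_t - G_t^*| \leq \epsilon_t$ then gives
\begin{align*}
\overline{G}_t(\tau) \;=\; G_t(\tau) + \epsilon_t \;\geq\; G_t^*(\tau) \;=\; G^*(\tau)
\quad\text{for all } \tau > \tau^*.
\end{align*}

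The key remaining step, and the only delicate one, is establishing the strict inequality $G^*(\tau) > 1-\alpha$ for every $\tau > \tau^*$. I would combine Assumption~\ref{assump:alpha_existence} with the maximality in the definition of $\tau^*$: Assumption~\ref{assump:alpha_existence} gives $G^*(\tau^*) = 1-\alpha$, while maximality of $\tau^*$ over $\{\tau : \mathbb{P}[f(x,y^*) \geq \tau] \geq \alpha\}$ forces $\mathbb{P}[f(x,y^*) \geq \tau] < \alpha$ for every $\tau > \tau^*$. Rewriting this as $\mathbb{P}[f(x,y^*) < \tau] > 1-\alpha$ and using $G^*(\tau) \geq \mathbb{P}[f(x,y^*) < \tau]$ yields the required strict inequality.

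Combining the two pieces, $\overline{G}_t(\tau) > 1-\alpha$ for every $\tau > \tau^*$, so the set $\{\tau : \overline{G}_t(\tau) \leq 1-\alpha\}$ is contained in $(-\infty, \tau^*]$, giving $\tau_{1-\alpha, t} \leq \tau^*$ and completing the induction. I expect the main obstacle to be this strict-inequality step: since $G^*$ may have flat regions, the equality $G^*(\tau^*) = 1-\alpha$ alone does not preclude $G^*(\tau) = 1-\alpha$ for some $\tau > \tau^*$, and it is only the maximality in the definition of $\tau^*$ that rules this out. Everything else is a routine manipulation of the DKW-type bound and the truncated CDF.
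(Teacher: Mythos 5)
Your proof is correct and follows essentially the same argument as the paper's: induction on $t$, using the inductive hypothesis $\tau_t \le \tau^*$ to identify $G_t^*$ with $G^*$ on $[\tau^*,\infty)$, and then the assumed uniform bound to show $\overline{G}_t$ exceeds $1-\alpha$ beyond $\tau^*$, so that $\tau_{1-\alpha,t} \le \tau^*$. The only difference is that the paper checks $\overline{G}_k(\tau^*) \ge 1-\alpha$ at the single point $\tau^*$ and immediately concludes $\tau_{1-\alpha,k} \le \tau^*$, whereas you prove the strict inequality $\overline{G}_t(\tau) > 1-\alpha$ for every $\tau > \tau^*$ via the maximality in the definition of $\tau^*$; this is, if anything, slightly more careful, since it explicitly rules out the borderline case where $\overline{G}_t$ sits at exactly $1-\alpha$ on an interval to the right of $\tau^*$.
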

\begin{proof}
We prove by induction. For the base case, when $t = 1$, we set $\tau_1 = -\infty$. Thus, $\tau_1 \le \tau^*$. For the inductive case,
note that $\tau_k \le \tau^*$ by the inductive hypothesis, so we have $G_k^*(\tau) = G^*(\tau)$ for all $\tau \ge \tau^*$. Then, we have $|G_k(\tau^*) - G^*(\tau^*)| = |G_k(\tau^*) - G_k^*(\tau^*)|\le \sup_{\tau \in \mathbb{R}}|G_k(\tau) - G_k^*(\tau)|  \le \epsilon_k$, where the last inequality follows from our assumption. Thus, we have $\overline{G}_k(\tau^*) = G_k(\tau^*) + \epsilon_k \ge G^*(\tau^*) = 1 - \alpha$. Recall that we define $\tau_{1-\alpha, k} = \sup\{\tau \in \mathbb{R} \mid \overline{G}_k(\tau) \le 1- \alpha\}$; thus, we have $\tau_{1-\alpha, k}\le \tau^*$. Since $\tau_{k+1} = \max\{\tau_{1-\alpha, k}, \tau_k\}$ with $\tau_{1-\alpha, k} \le \tau^*$ and $\tau_k \le \tau^*$, we have $\tau_{k+1} \le \tau^*$.
\end{proof}

Our next lemma bounds the range of $\overline{G}_t(\tau_{t})$.
\begin{lemma}
\label{lemma:range}
Suppose $\sup_{\tau\in\mathbb{R}} |G_t(\tau) - G_t^*(\tau)|\leq \epsilon_t$ for all $t \in [T]$ with $\epsilon_t = \sqrt{\log(2/\delta)/2t}$. Then, for all $t\in[T]$, we have
\begin{align*}
1-\alpha - \frac{2}{t} \leq \overline{G}_t(\tau_{t}) \leq 1 - \alpha + 2\epsilon_t.
\end{align*}
\end{lemma}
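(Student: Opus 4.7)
The plan is to bound $\overline{G}_t(\tau_t)$ from above and below separately. The upper bound exploits the comparison to the truncated population CDF $G_t^*$ together with Lemma~\ref{lemma:undercover} and Assumption~\ref{assump:alpha_existence}; the lower bound comes from the fact that $\tau_t$ is the image of a supremum under a right-continuous function.

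For the upper bound I will chain three inequalities: $\overline{G}_t(\tau_t)=G_t(\tau_t)+\epsilon_t\le G_t^*(\tau_t)+2\epsilon_t$ by the hypothesis $\sup_\tau|G_t(\tau)-G_t^*(\tau)|\le\epsilon_t$; then $G_t^*(\tau_t)=G^*(\tau_t)$ directly by the definition in~(\ref{def:cdf}); and finally $G^*(\tau_t)\le G^*(\tau^*)=1-\alpha$ by monotonicity of $G^*$, combined with $\tau_t\le\tau^*$ (Lemma~\ref{lemma:undercover}) and Assumption~\ref{assump:alpha_existence}. These together yield $\overline{G}_t(\tau_t)\le 1-\alpha+2\epsilon_t$.

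For the lower bound I will case-split on the update rule $\tau_t=\max\{\tau_{1-\alpha,t},\tau_{t-1}\}$. If $\tau_t=\tau_{1-\alpha,t}=\sup\{\tau:\overline{G}_t(\tau)\le 1-\alpha\}$, then right-continuity of $\overline{G}_t$ (a CDF shifted by a constant) forces $\overline{G}_t(\tau_{1-\alpha,t})\ge 1-\alpha$: if it were strictly less, a small right-neighborhood of the supremum would still lie in the defining set, contradicting the supremum. If instead $\tau_t=\tau_{t-1}>\tau_{1-\alpha,t}$, then $\tau_t$ lies strictly above the supremum and hence outside the defining set, so $\overline{G}_t(\tau_t)>1-\alpha$. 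Either way $\overline{G}_t(\tau_t)\ge 1-\alpha\ge 1-\alpha-2/t$; the $2/t$ slack absorbs the degenerate case in which the defining set is empty (so $\tau_{1-\alpha,t}=-\infty$) and a possible discrepancy from swapping the pre-update truncation level for the post-update one inside~(\ref{def:ecdf}).

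The main subtlety I expect is that $G_t$ may carry a substantial atom at $\tau_t$: by~(\ref{def:ecdf}) every sample $s_j$ with $s_j\le\tau_t$ is collapsed onto the point $\tau_t$, so $\overline{G}_t$ can jump by much more than $\epsilon_t$ there. The upper bound must therefore be routed through $G_t^*$, which by~(\ref{def:cdf}) carries exactly the same atom, so the atom cancels inside the $G_t-G_t^*$ difference that the hypothesis controls; the lower bound must appeal only to behavior infinitesimally above $\tau_t$ via right-continuity, keeping it insensitive to the atom's size.
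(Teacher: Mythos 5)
Your upper bound is exactly the paper's argument and is correct: route the comparison through $G_t^*$ so that the atom at $\tau_t$ cancels, then use $G_t^*(\tau_t)=G^*(\tau_t)$, monotonicity of $G^*$, Lemma~\ref{lemma:undercover}, and Assumption~\ref{assump:alpha_existence}. Your closing observation---that the hypothesis controls $G_t-G_t^*$ precisely because both carry the same atom at $\tau_t$---is the right reason this chain works.

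The lower bound has a genuine gap, and it is a timing one. You write the update as $\tau_t=\max\{\tau_{1-\alpha,t},\tau_{t-1}\}$, treating $\tau_t$ as the supremum associated with $\overline{G}_t$ itself. But $\tau_t$ must be fixed \emph{before} $s_t$ arrives (it is what decides whether $s_t$ is observed and how it is truncated), so the actual rule is $\tau_t=\max\{\tau_{1-\alpha,t-1},\tau_{t-1}\}$ with $\tau_{1-\alpha,t-1}=\sup\{\tau:\overline{G}_{t-1}(\tau)\le 1-\alpha\}$; this is the indexing used explicitly in the proof of Lemma~\ref{lemma:undercover}. Your right-continuity/supremum argument therefore establishes (essentially) $\overline{G}_{t-1}(\tau_t)\ge 1-\alpha$, which is not the claimed bound on $\overline{G}_t(\tau_t)$. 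Passing from $\overline{G}_{t-1}(\tau_t)$ to $\overline{G}_t(\tau_t)$ requires absorbing one new sample and renormalizing by $1/t$ instead of $1/(t-1)$, re-truncating at $\tau_t$ instead of $\tau_{t-1}$, and shrinking the band from $\epsilon_{t-1}$ to $\epsilon_t$; this is exactly where the $-2/t$ slack is spent. That your argument yields a bound with essentially no slack is itself a warning sign that the lag was not accounted for. The paper closes this step by counting order statistics: at the end of step $t-1$, $\tau_t$ is at least the $\lfloor(1-\alpha)(t-1)\rfloor$-th order statistic of the $t-1$ truncated samples; after $s_t$ arrives, case-splitting on $s_t\ge\tau_t$ versus $s_t<\tau_t$ shows at least $\lfloor(1-\alpha)t\rfloor-1$ of the $t$ truncated samples lie at or below $\tau_t$, whence $\overline{G}_t(\tau_t)\ge G_t(\tau_t)\ge(\lfloor(1-\alpha)t\rfloor-1)/t\ge 1-\alpha-2/t$. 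Your caveats about the empty defining set and the truncation-level swap do not cover the one-new-sample and renormalization effects, which are the dominant contributions to the $2/t$ term.
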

\begin{proof}
By Lemma~\ref{lemma:undercover}, $\tau_{t} \leq \tau^*$, implying that $G^*(\tau_{t}) \leq G^*(\tau^*)$. For the upper bound,  we have
\begin{align*}
\overline{G}_t(\tau_{t}) 
\leq G_t^*(\tau_{t}) + 2\epsilon_t
=G^*(\tau_{t}) + 2\epsilon_t
&\leq G^*(\tau^*) + 2\epsilon_t \\
&= 1-\alpha + 2\epsilon_t.
\end{align*}
Next, we consider the lower bound. When $t=1$, the inequality trivially holds. Otherwise, at step $t-1$, $\tau_t$ is at least the $\lfloor(1-\alpha) (t-1)\rfloor$-th order statistic. On the event that $s_t \geq \tau_t$, then $\tau_t$ is at least the $(\lfloor(1-\alpha) (t-1)\rfloor - 1)$-th order statistic. Thus, we have
\begin{align*}
\overline{G}_t(\tau_t) &\geq \frac{\lfloor(1-\alpha)t\rfloor - 1}{t}
\geq \frac{(1-\alpha)t}{t} - \frac{2}{t}.
\end{align*}
If $s_t < \tau_t$, then $\tau_t$ is at least the $(\lfloor(1-\alpha) t\rfloor$)-th order statistic. Then, we have
\begin{align*}
\overline{G}_t(\tau_t) &\geq \frac{\lfloor(1-\alpha)t\rfloor}{t}
\geq \frac{(1-\alpha)t}{t} - \frac{2}{t},
\end{align*}
as claimed.
\end{proof}

\thmMain*
\begin{proof}
Define the good event
\begin{align*}
E &=
\forall t\in[T] \;.\;
\sup_{\tau\in\mathbb{R}} |G_t^*(\tau) - G_t(\tau)| \leq  \epsilon_t.
\end{align*}
By a union bound and by Lemma~\ref{lemma:G_convergence}, we have
\begin{align*}
\mathbb{P}[\neg E]\le\sum_{t=1}^T2e^{-2t\epsilon_t^2}\le T\delta=\frac{2}{T}.
\end{align*}
Now, we have
\begin{align*}
R_T &= \mathbb{E}\left[\sum_{t=1}^T |\phi(\tau^*) - \phi(\tau_t)|  \biggm\vert E\right] \mathbb{P}[E] \\
&\qquad +\mathbb{E}\left[\sum_{t=1}^T |\phi(\tau^*) - \phi(\tau_t)| \biggm\vert \neg E\right] \mathbb{P}[\neg E] \\
&= \sum_{t=1}^T \mathbb{E}\left[|\phi(\tau^*) - \phi(\tau_t)|  \mid E\right]  \mathbb{P}[E] \\
&\qquad +\sum_{t=1}^T\mathbb{E}\left[ |\phi(\tau^*) - \phi(\tau_t)| \mid \neg E\right] \mathbb{P}[\neg E] \\
&\leq \sum_{t=1}^T \mathbb{E}\left[|\phi(\tau^*) - \phi(\tau_t)|  \mid E\right]  \mathbb{P}[E] \\
&\qquad +\sum_{t=1}^T\mathbb{E}\left[ 2 \phi_{max} \mid \neg E\right] \mathbb{P}[\neg E] \\
&\le \left(\sum_{t=1}^T \underbrace{\mathbb{E}\left[|\phi(\tau^*) - \phi(\tau_t)|  \mid E\right]}_{\eqqcolon X_t}\right) + 4\phi_{max}. 
\end{align*}

The first inequality follows from Assumption \ref{assump:phi_max}. Using Lemma~\ref{lemma:range}, we can bound $X_t$ as follows:
\begin{align*}
X_t&=\mathbb{E}\left[|\phi(\tau^*) - \phi(\tau_t)|  \mid E\right] \\
&= \mathbb{E}[\psi(G^*(\tau^*)) - \psi(G^*({\tau_t}))\mid E] \\
&\leq K  \mathbb{E}[|G^*(\tau^*) - G^*(\tau_t)|\mid E] \\
&= K  \mathbb{E}\left[|1-\alpha - \overline{G}_t(\tau_{t}) +\overline{G}_t(\tau_{t}) -G^*(\tau_t)|\mid E\right]\\
&\leq K\mathbb{E}\left[|1-\alpha - \overline{G}_t(\tau_{t})| + |\overline{G}_t(\tau_t) -G^*(\tau_t)| \mid E\right] \\
&\leq K \mathbb{E}\left[\max\left\{\frac{2}{t}, 2\epsilon_t\right\} + |\overline{G}_t(\tau_t) -G_t^*(\tau_t)| \biggm\vert E\right] \\
&\leq K \max\left\{\frac{2}{t}, 2\epsilon_t\right\} + 2K \mathbb{E}\left[  \sup_{\tau\in\mathbb{R}}|G_t(\tau) -G_t^*(\tau)| \biggm\vert E\right]\\
&\leq K \max\left\{\frac{2}{t}, 2\sqrt{\frac{\log(2/\delta)}{2t}}\right\} + 2K\sqrt{\frac{\log(2/\delta)}{2t}} \\
&\leq \frac{2K}{t} +  4K\sqrt{\frac{\log(2/\delta)}{2t}},
\end{align*}
where the first inequality follows from Assumption \ref{assump:lipschitz} and the third inequality follows from Lemma \ref{lemma:range}.
Thus, we have
\begin{align*}
\sum_{t=1}^TX_t & \leq \sum_{t=1}^T \left\{\frac{2K}{t} + 4K\sqrt{\frac{\log T}{t}}\right\} \\
&\leq K \left[2\log T +1 + 4\sqrt{T\log T}\}\right].
\end{align*}
The claim follows.
\end{proof}

\begin{figure*}
\centering
\begin{subfigure}[ImageNet]{
\includegraphics[width=0.30\textwidth]{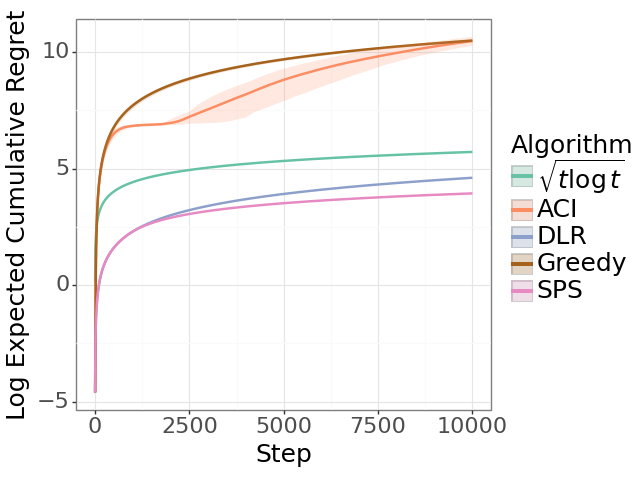}
\label{subfig:regret_1000}
}
\end{subfigure}\hfill
\begin{subfigure}[SQuAD]{
\includegraphics[width=0.30\textwidth]{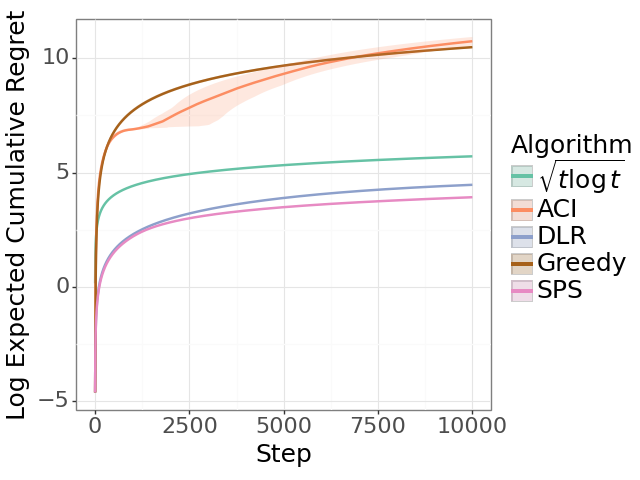}
\label{subfig:regret_3000}
}
\end{subfigure}\hfill
\begin{subfigure}[Auction]{
\includegraphics[width=0.30\textwidth]{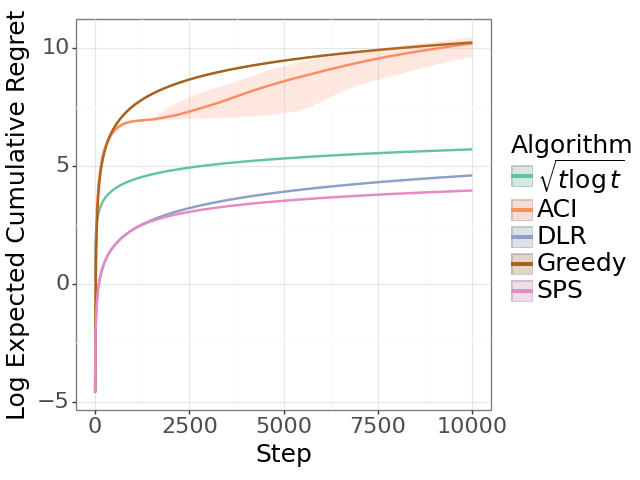}
\label{subfig:regret_auction}
}
\end{subfigure}

\caption{\textbf{Cumulative Regret}}

\label{fig: regret}
\end{figure*}

\begin{figure*}[h]
\centering
\begin{subfigure}[ImageNet]{
\includegraphics[width=0.30\textwidth]{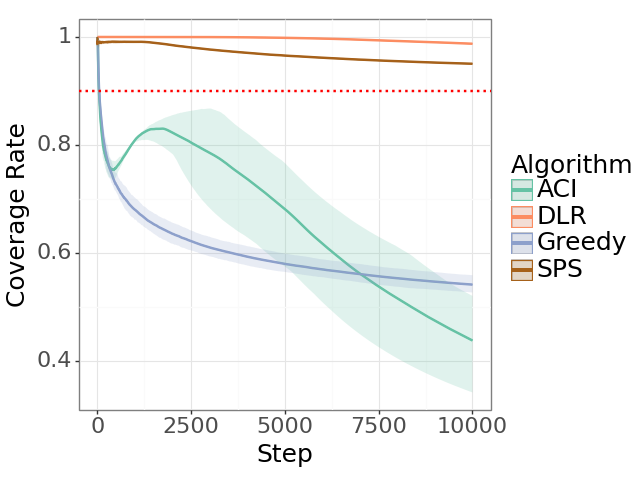}
}
\end{subfigure}\hfill
\begin{subfigure}[SQuAD]{
\includegraphics[width=0.30\textwidth]{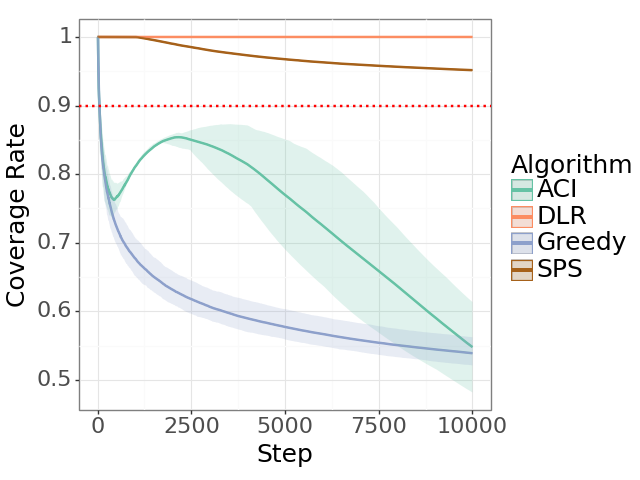}
}
\end{subfigure}\hfill
\begin{subfigure}[Auction]{
\includegraphics[width=0.30\textwidth]{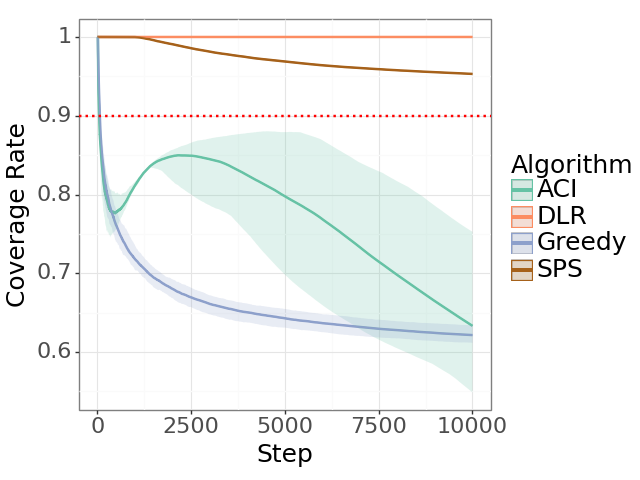}
}
\end{subfigure}
\caption{\textbf{Coverage Rate}}
\label{fig: coverage_rate}
\end{figure*}

\begin{figure*}[h]
\centering
\begin{subfigure}[ImageNet]{
\includegraphics[width=0.30\textwidth]{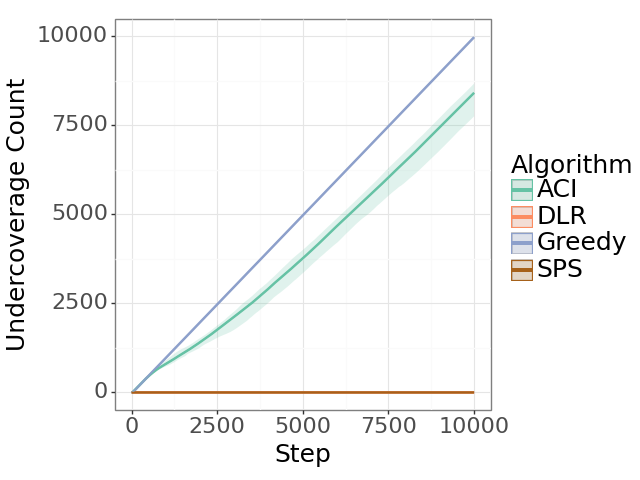}
}
\end{subfigure}\hfill
\begin{subfigure}[SQuAD]{
\includegraphics[width=0.30\textwidth]{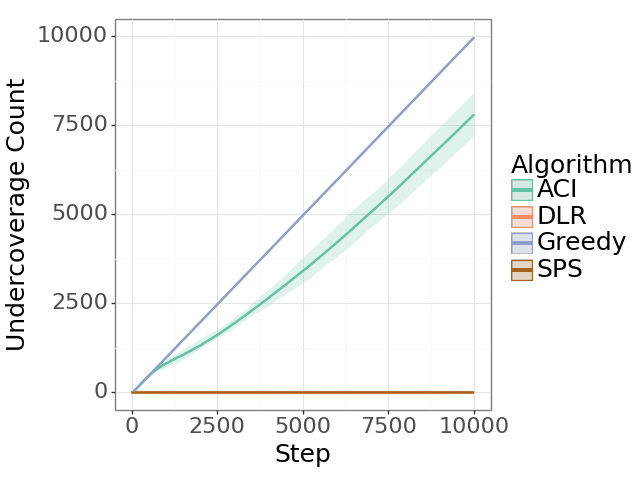}
}
\end{subfigure}\hfill
\begin{subfigure}[Auction]{
\includegraphics[width=0.30\textwidth]{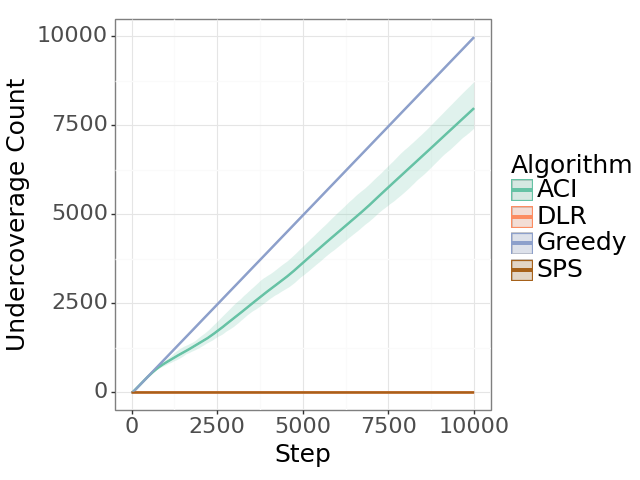}
}
\end{subfigure}
\caption{\textbf{Undercoverage Count}}
\label{fig: undercover_count}
\end{figure*}

\section{Experiments}

\subsection{Experimental Setup}

\textbf{Image classification task.}
We use the Vision Transformer~\citep{dosovitskiy2020image} model on the ImageNet dataset~\citep{deng2009imagenet}.\footnote{Obtained from \url{https://www.image-net.org/} with a custom and non-commercial license; we use the $16 \times 16$ down sampled version.}
Each image from the dataset belongs to exactly one class out of the 1,000 candidate classes. Consequently, the cardinality of the label domain is 1,000 (i.e., $|\mathcal{Y}| = 1000$). In the experiments, images arrive sequentially, and our algorithm aims to construct the smallest candidate label set that achieves the desired coverage.
We use the logits score returned by a ViT model (pretrained on the ImageNet training set) as our scoring function $f$; we use the logits instead of the softmax function to evaluate sensitivity to the choice of scoring function.

\textbf{Document retrieval task.}
Next, we consider the Dense Passage Retriever (DPR) model~\citep{karpukhin2020dense} on the SQuAD question-answering dataset. DPR leverages a dual-encoder architecture that maps questions and candidate documents to embedding vectors. Denoting the space of questions and documents by $\mathcal{Q}$ and $\mathcal{D}$, respectively, then DPR consists of a question encoder $E_Q: \mathcal{Q} \mapsto \mathbb{R}^{768}$ and a document encoder $E_D: \mathcal{D}\mapsto \mathbb{R}^{768}$. Given a question $q$ and a set of candidate documents $D \subseteq \mathcal{D}$, the similarity score between document $d \in D$ and question $q$ is
\begin{align*}
s_{q,d} = \frac{{E_Q(q)E_D(d)}}{|E_Q(q)|\cdot | E_D(d)|}.
\end{align*}
We use this score as our scoring function. Then, our goal is to construct the smallest set of candidate documents while guaranteeing that they contain the ground truth document with high probability.

Our dataset is SQuAD question-answering dataset \citep{rajpurkar2016squad}, a popular reading comprehension benchmark. Each question in SQuAD can be answered by finding the relevant information in a corresponding Wikipedia paragraph known as the context. The authors of DPR make a few changes to adapt SQuAD to document retrieval. First, paragraphs are further split into multiple, disjoint text blocks of 100 words, serving as the basic retrieval unit (i.e., candidate documents). Second, each question is paired with ground truth documents and a set of irrelevant documents.\footnote{In the data, all questions are paired with 50 irrelevant documents. We construct the candidate documents set by including all 50 irrelevant documents and 1 ground truth document. We exclude questions that have 0 ground truth documents. The data were obtained from DPR's public Github Repo: \url{https://github.com/facebookresearch/DPR} with licenses CC BY-SA 4.0 and CC BY-NC 4.0} In our experiments, for each question, we include one ground truth document and all the irrelevant documents to create the set of candidate documents. 

\textbf{Second-price auctions task.}
Lastly, we consider the scenario of setting reservation prices in second-price auctions, a well-studied problem that has semi-bandit feedback~\citep{cesa2014regret,haoyu2020online}. In this problem, a seller (the auctioneer) repeatedly sells the same type of items to a group of bidders. In each round $t$, she publicly announces a reservation price $p^t$, while bidders draw their private values $v^t$ from a fixed distribution that is unknown to the seller. For each bidder $i$, she will submit the bid $B^t_i = v^t_i$ if and only if $v^t_i \geq p^t$. The seller obtains reward $R_t$:
\begin{align*}
R_t = \begin{cases}
0  & \text{if } p^t > B_t^{(1)} \\
p^t  & \text{if } B_t^{(2)} < p^t \leq B_t^{(1)} \\
B_t^{(2)} & \text{if } p^t \leq B_t^{(2)},
\end{cases}
\end{align*}
where $B_t^{(1)}, B_t^{(2)}$ denote the highest and second-highest bid received by the seller in round $t$, implying that the seller only observes bids that are higher than $p^t$. We consider a seller that aims to learn
\begin{align*}
p^*=\operatorname*{\arg\max}_{p\in\mathbb{R}}p
\quad\text{subj. to}\quad
\mathbb{P}[B^{(1)}\ge p]\ge\alpha.
\end{align*}
That is, the seller aims to find the highest reservation price $p$ such that she can sell the item with probability at least $\alpha$. This problem can be solved using online conformal prediction. Following standard practice~\citep{mohri2014learning}, we use a synthetic dataset that adapts from eBay auction data \citep{jank2010modeling}. Specifically, we simulate the distribution of $v_i$ by using the empirical distribution of the observed bids in the dataset.




\textbf{Baselines.}
We compare to greedy, Adaptive Conformal Inference (ACI)~\citep{gibbs2021adaptive}, and Decaying Learning Rate (DLR)~\citep{angelopoulos2024online}. The greedy strategy chooses
\begin{align*}
\tau_t = \sup\{\tau\in\mathbb{R}\mid G_t(\tau) \leq 1 - \alpha\}
\end{align*}
at every step, which cannot guarantee the $\alpha$ coverage rate, leading it to undercover significantly more than desired.  Next, Adaptive Conformal Inference (ACI) adjusts $\alpha_t$ (and then $\tau_t$) based on whether the ground truth label is in the previous round's prediction set. We choose the learning rate $\gamma$ from a grid search in a candidate set proposed in \citep{gibbs2022conformal}. To run ACI in our semi-bandit feedback setting, we only update the quantile function $\hat{Q}_t$ when ground-truth label $y_t^*$ is observed; as we show, this biased strategy for updating $\hat{Q}_t$ leads it to fail to achieve the desired coverage rate.

Lastly, we consider the Decaying Learning Rate (DLR) algorithm proposed in \citep{angelopoulos2024online}. In contrast to ACI, DLR directly performs gradient descent on the cutoffs $\tau_t$ instead of the quantiles $\alpha_t$; it can be viewed as running ACI with $\hat{Q}_t(\alpha)=\alpha$. We set the learning rate to the one used in the experiments from the original paper---i.e., $\eta_t = t^{-1/2-\epsilon}$ with $\epsilon=0.1$. We show results for two additional baselines, explore-then-commit (ETC) and conservative ETC, in Appendix~\ref{appx:additional_experiments}.

\textbf{Experiment parameters.}
We use $\alpha = 0.9$ and $T=10000$, and report averages across 10 runs. The shaded areas are the 95\% confidence intervals computed from 10 runs.

\textbf{Metrics.}
First, we consider the cumulative regret for the following reward function $\phi$:
\begin{align*}
\phi(\tau) = \begin{cases}
- \lambda_1 \lvert G^*(\tau) - (1 - \alpha) \rvert & \text{if $G^*(\tau) \leq  (1 - \alpha)$ }\\
- \lambda_
2 \lvert G^*(\tau) - (1 - \alpha) \rvert & \text{if $G^*(\tau) >  (1 - \alpha)$},
\end{cases} 
\end{align*}
for some $0 < \lambda_1 < \lambda_2$; we take $\lambda_1 = 0.1$ and $\lambda_2 = 10$. Note this loss imposes a larger penalty for undercovering compared to overcovering.
Next, we consider coverage rate:
\begin{align*}
\text{Coverage Rate} = \frac{1}{T}\sum_{t=1}^T \mathbbm{1}(y^*_t \in C_t).
\end{align*}
Third, we consider the number of times $\tau_t>\tau^*$, which measures the violation of our safety condition:
\begin{align*}
\text{Undercoverage Count} = \sum_{t=1}^T \mathbbm{1}(\tau_t > \tau^*).
\end{align*}
Our goals are to (i) achieve $\Tilde{\mathcal{O}}(\sqrt{T})$ regret, while (ii) maintaining the desired $\alpha$ coverage rate, and (iii) achieving zero undercoverage count with probability at least $1-\delta$ over the entire time horizon.


\subsection{Results}
\label{section:results}

\textbf{Regret.}
First, Figure~\ref{fig: regret} shows the cumulative regret of each approach on each task. We applied the log-scale for better readability. As can be seen, our algorithm consistently obtains the lowest regret. 
The ACI algorithm attains a regret level comparable to the greedy algorithm because its quantile function is updated with a bias. Furthermore, note that the curves for both ACI and the greedy algorithm appear to be superlinear. This can happen since these algorithms do not properly account for semi-bandit feedback---in particular, the empirical estimate $G_t$ of the distribution becomes increasingly truncated.
Moreover, the poor performance of DLR can be attributed to the fact that it does not make use of a quantile function. As a consequence, without score-specific hyperparameter tuning (i.e., tuning the learning rate), it can converge very slowly to the optimal prediction sets. For many scoring functions, we do not have prior knowledge of the score's range, which exacerbates these issues. These issues are particularly salient when we consider tasks such as the second-price auction, where the score's range (i.e., the range of bids) can be difficult to predict in advance. In contrast, our algorithm consistently performs and does not have any hyperparamters to tune.

\textbf{Coverage rate.}
Next, Figure~\ref{fig: coverage_rate} shows the coverage rate achieved by each algorithm for each task. Both ACI and greedy fail to maintain the desired coverage rate in all three tasks. DLR and SPS both achieve the desired coverage rate; however, SPS converges more quickly.

\textbf{Undercoverage count.}
Finally, Figure~\ref{fig: undercover_count} shows the undercoverage count. Note that greedy and ACI frequently undercover. ACI has high undercoverage count because the prediction set oscillates between being too small (i.e., $\tau_t>\tau^*$) and too large (i.e., $\tau_t<\tau^*)$. This behavior can be undesirable since it means that different inputs have different coverage probabilities. In contrast, our algorithm never undercovers since $\tau_t$ is guaranteed to converge to $\tau^*$ from below. Interestingly, DLR also does not undercover. While their algorithm is not guaranteed to satisfy this property, it incrementally estimates $\tau_t$ by starting from a conservative  $\tau$. Thus, if the learning rate is small enough, it would not undercover until $\tau_t$ gets significantly closer to the true $\tau^*$.

\textbf{Summary.}
These results show that our algorithm achieves (and converges to) the desired coverage rate while achieving sublinear regret and maintaining $\tau_t<\tau^*$. In contrast, ACI and greedy fail to achieve the desired coverage rate, and DLR converges much more slowly than our algorithm.

\section{Conclusion}

We have proposed a novel conformal prediction algorithm for constructing online prediction sets under stochastic semi-bandit feedback. We have shown our algorithm can be applied to learn optimal prediction sets in image classification, document retrieval, and second-price auction reservation price prediction. Our experiments show we achieve the desired $\alpha$ coverage level while achieving prediction set sizes that achieve sublinear regret and zero undercoverage count.

\section*{Impact Statement}
Our approach aims to improve the trustworthiness of machine learning models via reliably uncertainty quantification; we do not foresee any ethical concerns.



\bibliography{ref}
\bibliographystyle{icml2025}


\clearpage
\appendix

\section{Additional Experiments}
\label{appx:additional_experiments}

We consider two additional baselines. First, explore-then-commit (ETC) chooses $\tau_t = -\infty$ in the first $m$ steps, and then commits to
\begin{align*}
\tau_t = \sup\{\tau\in\mathbb{R}\mid G_m(\tau) \leq 1 - \alpha\}.
\end{align*}
Next, conservative ETC (Con-ETC) uses the same strategy, except it  commits to
\begin{equation*}
\tau_t = \sup\{\tau\in\mathbb{R}\mid \overline{G}_m(\tau) \leq 1 - \alpha\}
\end{equation*}
after the exploration period. In other words, it commits to a conservative choice of $\tau$ that satisfies our coverage guarantee, and also guarantees $\tau_t \leq \tau^*$ with probability at least $1 - 2/T$. The number of exploration rounds are chosen via a grid search. Results are shown in Figures~\ref{fig: regret_appendix}, \ref{fig: coverage_rate_appendix}, \&~\ref{fig: undercover_count_appendix}. As can be seen, ETC fails to achieve the desired coverage rate since it does not account for uncertainty; conversely, Con-ETC achieves very high regret since it does not adaptively choose $\tau_t$ over time.

\begin{figure*}[h]
\centering
\begin{subfigure}[ImageNet]{
\includegraphics[width=0.30\textwidth]{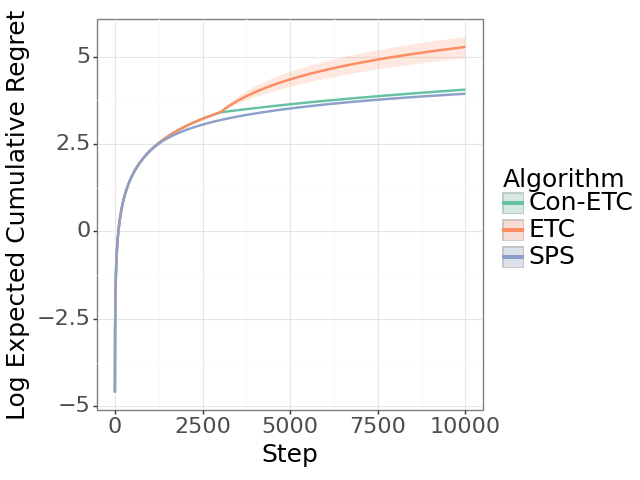}
}
\end{subfigure}\hfill
\begin{subfigure}[SQuAD]{
\includegraphics[width=0.30\textwidth]{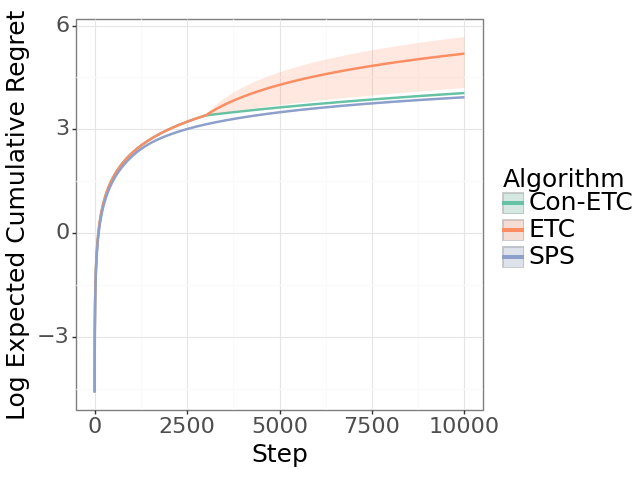}
}
\end{subfigure}\hfill
\begin{subfigure}[Auction]{
\includegraphics[width=0.30\textwidth]{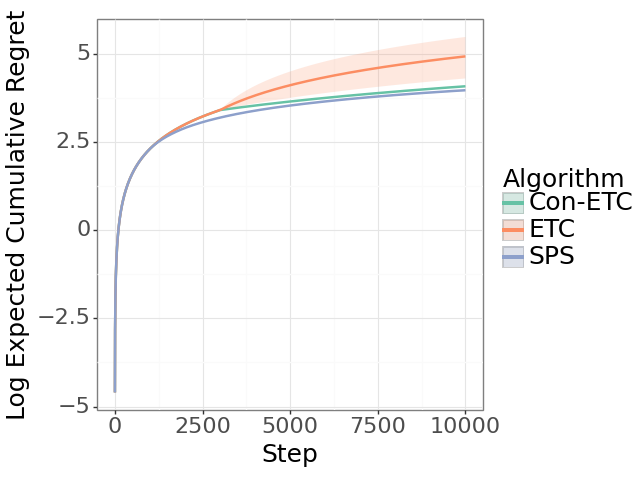}
}
\end{subfigure}
\caption{\textbf{Cumulative Regret}}

\label{fig: regret_appendix}
\end{figure*}

\begin{figure*}[h]
\centering
\begin{subfigure}[ImageNet]{
\includegraphics[width=0.30\textwidth]{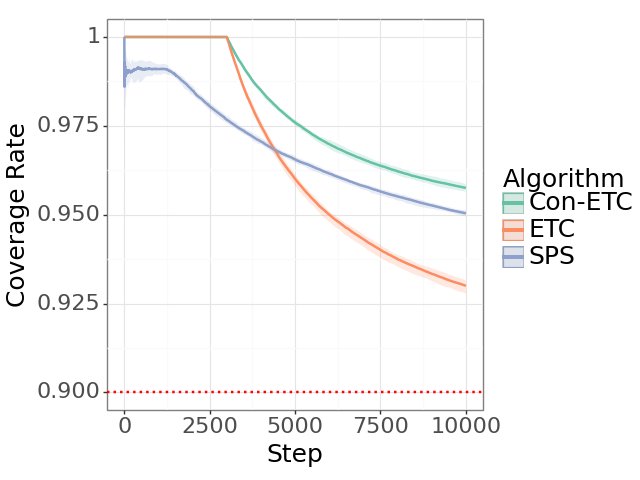}
}
\end{subfigure}\hfill
\begin{subfigure}[SQuAD]{
\includegraphics[width=0.30\textwidth]{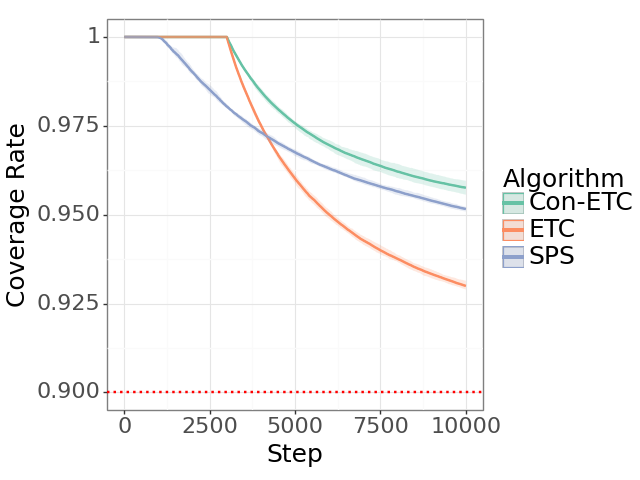}
}
\end{subfigure}\hfill
\begin{subfigure}[Auction]{
\includegraphics[width=0.30\textwidth]{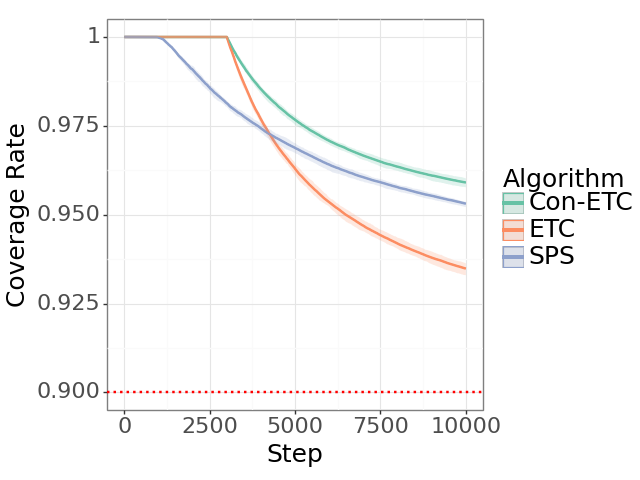}
}
\end{subfigure}
\caption{\textbf{Coverage Rate}}
\label{fig: coverage_rate_appendix}
\end{figure*}

\begin{figure*}
\centering
\begin{subfigure}[ImageNet]{
\includegraphics[width=0.30\textwidth]{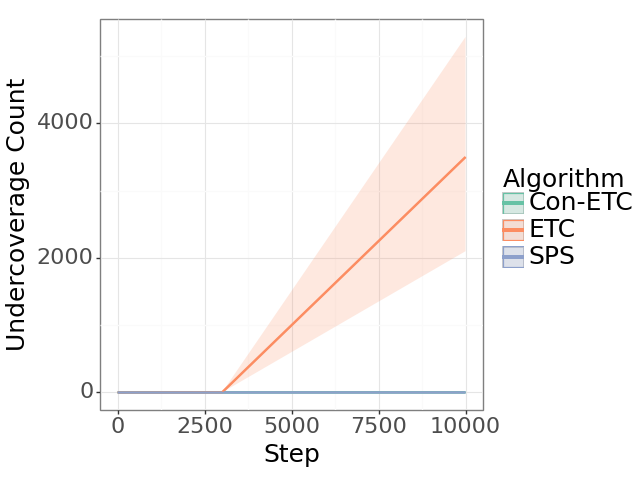}
}
\end{subfigure}\hfill
\begin{subfigure}[SQuAD]{
\includegraphics[width=0.30\textwidth]{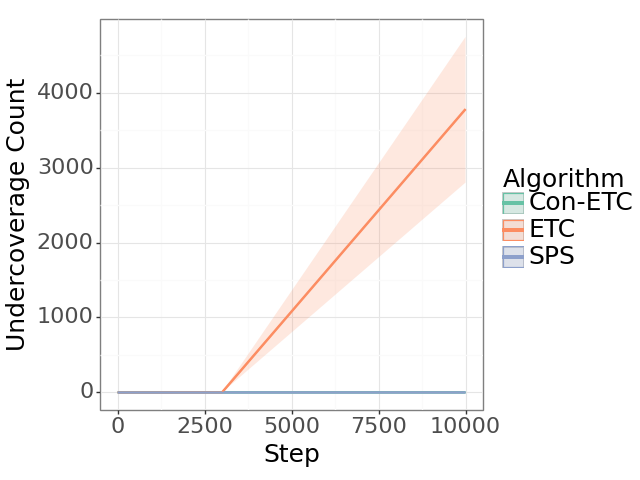}
}
\end{subfigure}\hfill
\begin{subfigure}[Auction]{
\includegraphics[width=0.30\textwidth]{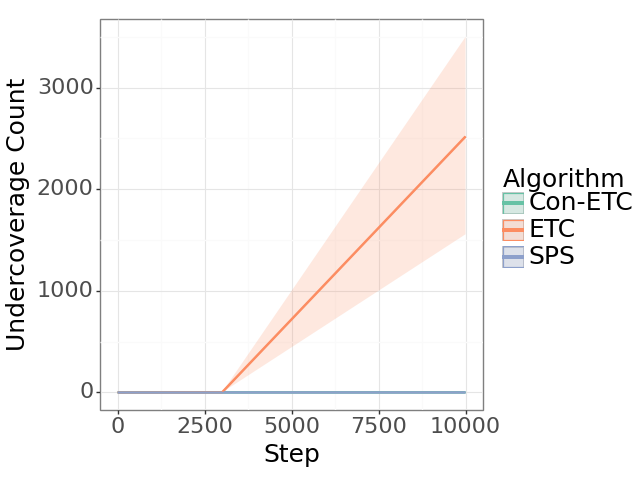}
}
\end{subfigure}
\caption{\textbf{Undercoverage Count}}
\label{fig: undercover_count_appendix}
\end{figure*}

\end{document}